\documentclass[letterpaper,10pt,conference]{ieeeconf}
\IEEEoverridecommandlockouts
\usepackage[T1]{fontenc}
\usepackage{amsmath,amssymb,graphicx,booktabs,multirow,cite,url}
\newtheorem{proposition}{Proposition}
\usepackage{xcolor}
\newcommand{\dev}{\bar\delta}
\newcommand{\acc}{\delta_{\mathrm{acc}}}
\newcommand{\rej}{\delta_{\mathrm{rej}}}

\title{\LARGE\bf Predict Before You Deploy: Offline Prediction of Quantization-Induced Task Degradation for World Action Models}
\author{
Jiuyi Xu$^{1}$, Jinjia Guo$^{2}$, Meida Chen$^{3}$, Jing Du$^{2}$, Yangming Shi$^{1,4}$
\thanks{$^{1}$Robotics Program, Colorado School of Mines, Golden, CO USA.}
\thanks{$^{2}$Department of Civil and Coastal Engineering, University of Florida, Gainesville, FL USA.}
\thanks{$^{3}$University of Southern California - Institute for Creative Technology, Los Angeles, CA USA.}
\thanks{$^{4}$Department of Civil and Environmental Engineering, Colorado School of Mines, Golden, CO USA.}
}
\begin{document}
\maketitle
\thispagestyle{empty}\pagestyle{empty}

\begin{abstract} 

World action models (WAMs) rely on video-generation backbones, requiring substantial memory and compute for deployment.
Post-training quantization reduces memory and can accelerate inference, but bit width, grouping, and quantizer choice define a large configuration space.
Identifying configurations that preserve task performance through exhaustive closed-loop evaluation is costly.
We propose PreDE (Predict Before You Deploy), a policy-calibrated framework for predicting quantization-induced task degradation from offline action deviations.
Using closed-loop outcomes from a small development set, PreDE calibrates two thresholds and accepts, rejects, or defers new configurations using a fixed observation log.
Under a within-setting label-ordering hypothesis, the rule issues decisions where all thresholds consistent with the development labels agree.
Across five WAMs and four benchmark settings, quantization produces configuration-dependent task losses that cannot be explained by bit width alone or a shared deviation threshold.
Across 28 held-out configurations from two policies, PreDE issued 21 decisions before observing closed-loop outcomes (75\% coverage), all matching the observed acceptable or degraded labels.
Deferred candidates included both acceptable outcomes and a 33-percentage-point loss.
In 450 Franka Research 3 trials across two independently fine-tuned policies, all configurations assigned to high-deviation groups before testing showed significant degradation, while low-deviation comparisons showed no statistically significant degradation.
On the real robot, W4A4 achieved a $1.37\times$ action-query speedup and approximately 44\% lower peak memory.
These results support policy-specific behavioral calibration for quantization configuration selection while identifying candidates that require closed-loop evaluation.
 The code is available at \url{https://github.com/jiuyixu25/PreDE}.

\end{abstract}

\section{Introduction}
\label{sec:introduction}

World action models (WAMs) combine video generation and action modeling for robot control~\cite{cosmos,uva,mimic,rynn,fastwam}, but their large backbones impose substantial memory and computational demands on deployment.
Post-training quantization (PTQ)~\cite{svdquant,hqq,qserve,viditq} can reduce these demands without retraining the policy.
However, its deployment value depends on retaining adequate task performance.
Selecting a quantization configuration involves weight and activation precision, grouping, layer coverage, and quantizer design.
These choices produce candidates with different resource requirements and behavioral effects, even at the same nominal bit width.
Evaluating every candidate in closed loop reveals its task performance but requires repeated simulator or robot interaction.
We therefore ask: \emph{can we predict whether a new quantization configuration will degrade task performance before evaluating it in closed loop?}

Offline action comparison provides an accessible signal: replay identical observations through the reference and quantized policies and measure their action deviations.
However, task success depends on the consequences of action changes throughout a rollout, and deviation magnitudes need not have the same significance across policies.
Our experiments show sharply different outcomes for grouped and per-channel three-bit Cosmos configurations, as well as a cross-policy ordering reversal: at three bits, Fast-WAM suffers a larger task loss than Cosmos despite a smaller mean action deviation.
Within the initial Cosmos and UVA development sets, acceptable configurations nevertheless have smaller deviations than degraded configurations.
These observations motivate interpreting offline deviation through behavioral evidence specific to the policy and evaluation setting.

We propose PreDE (\textbf{Pre}dict Before You \textbf{DE}ploy), a policy-calibrated framework for predicting quantization-induced task degradation.
A small development set pairs offline action deviations with closed-loop outcomes to establish acceptance and rejection thresholds.
For each new candidate, PreDE compares its actions with cached reference actions on a fixed observation log.
It accepts candidates at or below the acceptance threshold, rejects those at or above the rejection threshold, and defers intermediate candidates to closed-loop evaluation.
After calibration, issuing predictions requires only offline policy queries.  
Deferred candidates require further interaction if a deployment decision is needed.
We provide a post hoc formalization of this rule through a within-setting single-crossing hypothesis:
acceptable and degraded outcome labels can be separated by an unknown deviation threshold, while unresolved outcomes impose no constraint.
Under this hypothesis, the two thresholds delimit the interval of thresholds consistent with the development labels.
PreDE decides where these thresholds agree and defers where they disagree.
This interpretation explains the max/min construction and deferral interval, with conclusions conditional on the ordering hypothesis.

We evaluate PreDE through cross-model quantization measurements, held-out prediction, and physical robot experiments.
The broad evaluation spans five WAMs and four benchmark settings.
Across 20 Cosmos Policy and eight UVA held-out configurations, PreDE issues 21 decisions (75\% coverage), all matching the observed acceptable or degraded outcome labels.
These predictions are determined before observing candidate closed-loop outcomes.
Deferred candidates include both acceptable outcomes and a 33-percentage-point loss, demonstrating that deferral does not imply mild degradation.
In a separate 450-trial Franka Research 3 experiment, all tested configurations assigned to high-deviation groups before trials show significant degradation.
This physical grouping study provides complementary behavioral evidence without transferring the simulation thresholds.
On the robot, W4A4 quantization of Cosmos Policy provides a $1.37\times$ action-query speedup and approximately 44\% lower peak memory.

Our contributions are threefold:
\begin{itemize}
\item We propose PreDE for predicting quantization-induced task degradation from offline action deviations, and formalize its acceptance, rejection, and deferral regions under a within-setting label-ordering hypothesis.

\item We evaluate predictions on 28 held-out configurations from two WAM policies, with decisions determined before observing closed-loop results, and report prediction results, decision coverage, label ordering, and measurement sensitivity.

\item We characterize quantization outcomes across five WAMs and four benchmark settings, and present a separate real-robot study of behavioral degradation and the latency and memory benefits of quantization.
\end{itemize}

\section{Related Work}
  \label{sec:related}

\paragraph{Robot foundation policies and world action models}
Generalist robot policies such as OpenVLA~\cite{openvla}
and Octo~\cite{octo} learn control from diverse demonstrations.
World-model-based policies further connect actions with
visual dynamics.
Cosmos Policy adapts a pretrained video diffusion model
to generate actions and predict future states~\cite{cosmos}.
UVA learns a joint video--action representation with
separate decoding paths~\cite{uva}, while mimic-video
uses video representations for action prediction~\cite{mimic}.
RynnVLA-002 integrates vision-language-action learning
with world modeling~\cite{rynn}, and Fast-WAM investigates
the role of future prediction during training and
inference~\cite{fastwam}.
We evaluate quantization across these WAMs, whose
architectures and action interfaces motivate
policy-specific behavioral assessment.

\paragraph{Post-training quantization}
PTQ methods reduce numerical precision while controlling
the resulting approximation error.
Round-to-nearest (RTN) provides a basic rounding rule
for uniform quantization~\cite{nagel2021}.
GPTQ uses approximate second-order information for
weight quantization~\cite{gptq}, and AWQ uses activation
statistics to identify and protect salient weight
channels~\cite{awq}.
SmoothQuant redistributes quantization difficulty between
weights and activations through an equivalent scaling
transformation~\cite{smoothquant}.
HQQ provides weight quantization without calibration
data~\cite{hqq}.
For diffusion models, Q-Diffusion addresses quantization
of iterative denoising networks~\cite{qdiffusion},
while PTQD models and corrects quantization noise
within the denoising process~\cite{ptqd}.
PTQ4DiT addresses salient channels and timestep-dependent
activation variation in diffusion transformers~\cite{ptq4dit}.
SVDQuant absorbs outliers into a low-rank component~\cite{svdquant},
and ViDiT-Q studies diffusion-transformer quantization
for image and video generation~\cite{viditq}.
These methods construct low-precision candidates and  
we study their effects on closed-loop task outcomes.

\paragraph{Quantization of robot policies}
Recent methods address quantization for robot control.
QuantVLA introduces scale-calibrated PTQ for VLA
backbones and diffusion action heads~\cite{quantvla}.
Mix-QVLA uses task-evidence distortion and
execution-dependent sensitivity to guide mixed-precision
allocation~\cite{mixq}.
SQIL incorporates saliency into quantization-aware
imitation learning~\cite{sqil}.
These approaches optimize compressed policies.
Deferral follows the principle of withholding predictions
when evidence is insufficient~\cite{chow1970,selective}. 
In PreDE, this differs from rejecting a configuration
as likely to degrade.
Prior work on learning from human demonstrations also
shows that offline validation loss can be unreliable
for selecting policies by task success~\cite{robomimic}.
PreDE instead judges new configurations produced by
existing quantizers.
It uses development closed-loop outcomes to calibrate
offline action deviation, then accepts, rejects, or
defers unseen candidates.
We evaluate predictions fixed before candidate outcomes,
reporting decision coverage and unresolved results.

\section{Deviation Tolerance in Closed Loop}
\label{sec:tolerance}

\paragraph{Observation}
Our broad quantization evaluation shows that nominal precision alone does not determine task performance:
grouped and per-channel three-bit Cosmos configurations produce substantially different outcomes (Sec.~\ref{sec:cross_model}).
Offline action deviation provides a complementary signal.
In the initial Cosmos development set, the largest deviation among acceptable configurations is $0.02191$, while the smallest among degraded configurations is $0.09736$.
The corresponding UVA development values are $0.0049$ and $0.00925$.
These development results exhibit a separation between acceptable and degraded configurations.
The interpretation of deviation nevertheless depends on the evaluation setting: Fast-WAM W3 degrades at a smaller deviation than Cosmos HQQ W3 tolerates.
These observations motivate the within-setting ordering hypothesis below. 
The hypothesis is a post hoc formalization of the method. 

\paragraph{Hypothesis H (single crossing within a setting)}
For a fixed policy, task distribution, control condition, offline measurement protocol, and outcome-labeling rule, there exists a threshold $\theta$ such that every configuration labeled acceptable has $\dev(q;D)\leq\theta$ and every configuration labeled degraded has $\dev(q;D)>\theta$. 
Unresolved outcomes place no constraint on $\theta$. 
An acceptable configuration with deviation greater than or equal to that of a degraded configuration contradicts H within the same setting. 

\paragraph{A sufficient condition for tolerance}
A conditional robustness argument motivates tolerance to small perturbations. 
Suppose the reference and quantized rollouts start from the same state. 
Let $e\geq0$ bound the action perturbation over the rollout, and let $K_H>0$ bound its effect on state discrepancy over an executed chunk of $H$ steps. 
Assume the discrepancy at query $t$ satisfies $r_{t+1}\leq\rho r_t+K_H e$, with $r_0=0$ and $0\leq\rho<1$. 
Then $r_t\leq K_H e/(1-\rho)$. 
If remaining within distance $m$ of the reference trajectory preserves task success, the condition $K_H e/(1-\rho)<m$ is sufficient to preserve a successful reference rollout. 
An additional uniform bound $e\leq c\,\dev(q;D)$ for some $c>0$ would yield the sufficient condition $\dev(q;D)<m(1-\rho)/(K_Hc)$. 
PreDE therefore calibrates its decision thresholds empirically.

\paragraph{Consequences for a decision rule}
Let $C$ be a development set with closed-loop labels (Sec.~\ref{sec:method}), $C_A$ and $C_R$ its acceptable and degraded members, and
\begin{equation}
\acc=\max_{q\in C_A}\dev(q;D),\qquad \rej=\min_{q\in C_R}\dev(q;D).
\label{eq:anchors_def}
\end{equation}
Under this threshold model, when $C_A$ and $C_R$ are nonempty and $\acc<\rej$, the max/min thresholds are exactly the endpoints of the interval of thresholds consistent with the definite development labels, $\Theta(C)=[\acc,\rej)$.

\begin{proposition}[Agreement region]
\label{prop:agreement}
Assume H holds for the setting and $\Theta(C)$ is nonempty. 
Then $\theta\in\Theta(C)$. 
For $\dev(q;D)\leq\acc$, every consistent threshold excludes a degraded label. 
For $\dev(q;D)\geq\rej$, every consistent threshold excludes an acceptable label. 
For $\acc<\dev(q;D)<\rej$, consistent thresholds place the candidate on different sides of the threshold. 
Hence a rule that accepts at or below $\acc$, rejects at or above $\rej$, and defers in between accepts no configuration labeled degraded and rejects none labeled acceptable.
\end{proposition}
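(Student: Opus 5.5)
The argument is a direct unpacking of Hypothesis H together with the definition \eqref{eq:anchors_def}. I would first show $\theta\in\Theta(C)$. Let $\theta$ be the threshold that H asserts exists for this setting. H applies to every configuration in the setting, so it applies in particular to the development members. For every $q\in C_A$ we get $\dev(q;D)\le\theta$, and taking the maximum gives $\acc\le\theta$. For every $q\in C_R$ we get $\dev(q;D)>\theta$, and taking the minimum (finite set) gives $\rej>\theta$. Hence $\theta\in[\acc,\rej)=\Theta(C)$. Unresolved development members impose no constraint by H, so they do not enter. The same computation in reverse shows that any $\theta'\in\Theta(C)$ is consistent with all definite development labels. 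This confirms that $\Theta(C)$ is exactly the set of consistent thresholds, as stated before the proposition.

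Next I would take the three regions in turn, quantifying over an arbitrary $\theta'\in\Theta(C)$.
\begin{itemize}
\item If $\dev(q;D)\le\acc$, then $\dev(q;D)\le\acc\le\theta'$, so under threshold $\theta'$ a degraded label, which requires $\dev>\theta'$, is impossible.
\item If $\dev(q;D)\ge\rej$, then $\dev(q;D)\ge\rej>\theta'$, so an acceptable label, which requires $\dev\le\theta'$, is impossible.
\item If $\acc<\dev(q;D)<\rej$, I exhibit two consistent thresholds. The choice $\theta'=\acc$ gives $\dev>\theta'$. The choice $\theta'=\dev(q;D)$ itself lies in $[\acc,\rej)$ and gives $\dev\le\theta'$. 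So consistent thresholds disagree on this candidate.
\end{itemize}

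For the final claim I apply the first two regions to the true $\theta$, which lies in $\Theta(C)$ by the first step. An accepted $q$ has $\dev(q;D)\le\acc\le\theta$. If $q$ carried a degraded label, H would force $\dev(q;D)>\theta$, a contradiction. Symmetrically, a rejected $q$ has $\dev(q;D)\ge\rej>\theta$, so it cannot carry an acceptable label. Deferred candidates make no claim, so nothing needs to be checked for them.

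No step is a real obstacle. The only point needing care is that H is a statement about all configurations in the setting, including held-out candidates, and not only the development set. That universality is what lets the true $\theta$ certify labels on new $q$. I would state this explicitly. I would also note that the strict-versus-weak inequalities in H ($\le\theta$ for acceptable, $>\theta$ for degraded) are exactly what make the accept boundary closed at $\acc$ and the reject boundary closed at $\rej$.
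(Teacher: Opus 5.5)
Your proposal is correct and follows the paper's proof essentially step for step. You derive $\acc\le\theta<\rej$ by applying H to the definite development labels, read off the exclusions in the two outer regions, and use the same witnesses $\theta_1=\acc$ and $\theta_2=\dev(q;D)$ for disagreement in the middle region, while also stating explicitly two points the paper's proof leaves implicit: that H must also cover held-out configurations, and that $\Theta(C)$ is exactly the set of consistent thresholds.
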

\begin{proof}
Acceptable labels imply $\acc\leq\theta$, and degraded labels imply $\theta<\rej$. 
Thus $\dev\leq\acc$ excludes a degraded label, while $\dev\geq\rej$ excludes an acceptable label. 
For any $\acc<x<\rej$, the consistent thresholds $\theta_1=\acc$ and $\theta_2=x$ place $x$ on opposite sides of the threshold rule.
\end{proof}


\begin{proposition}[Refinement]
\label{prop:refine}
Suppose both development subsets are nonempty and $\Theta(C)=[\acc,\rej)$ is nonempty. 
Adding an evaluated configuration never enlarges $\Theta(C)$.
The inclusion is strict if and only if the new outcome is acceptable with $\dev>\acc$ or degraded with $\dev<\rej$. An unresolved outcome leaves the set unchanged. 
A contradictory label can make the consistent set empty.
\end{proposition}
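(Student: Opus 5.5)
The plan is to work directly with the characterization of the consistent set as an intersection of constraints, one per definite label. Under Hypothesis H, an acceptable configuration $q$ imposes $\theta\geq\dev(q;D)$, and a degraded configuration imposes $\theta<\dev(q;D)$. The consistent set is therefore
\[
\Theta(C)=\bigcap_{q\in C_A}[\dev(q;D),\infty)\cap\bigcap_{q\in C_R}(-\infty,\dev(q;D)).
\]
With both subsets nonempty, this intersection equals $[\acc,\rej)$, matching \eqref{eq:anchors_def}. An unresolved outcome contributes no constraint by H, so it leaves $\Theta(C)$ unchanged. For a new configuration $q'$ with a definite label, $\Theta(C\cup\{q'\})$ is $\Theta(C)$ intersected with one more set. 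Intersection never enlarges a set, which gives the monotonicity claim at once.

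For the strictness characterization, I will split into the two label cases. If $q'$ is acceptable with deviation $x$, the new set is $[\max(\acc,x),\rej)$. If $x\leq\acc$, this equals the old set. If $x>\acc$, the point $\acc$ lies in the old set, since $\acc<\rej$ by nonemptiness, but not in the new set, so the inclusion is strict. Symmetrically, if $q'$ is degraded with deviation $x$, the new set is $[\acc,\min(\rej,x))$. If $x\geq\rej$, nothing changes. If $x<\rej$, strictness requires a witness in $[x,\rej)\cap[\acc,\rej)$. The point $\max(x,\acc)$ serves, because it lies below $\rej$ and is excluded from the new set. Together with the unresolved case, this establishes both directions of the ``if and only if.''

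For the last claim, I will exhibit an explicit contradictory label. An acceptable $q'$ with $x\geq\rej$ gives $[x,\rej)=\emptyset$. A degraded $q'$ with $x\leq\acc$ gives $[\acc,x)=\emptyset$. This matches the contradiction described after H.

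The main subtlety is the degraded case with $x<\acc$. There the new set is empty, and the strictness witness $\max(x,\acc)=\acc$ must still be checked to be excluded. It is excluded, since $\acc\geq x$. I will also state that ``strict'' includes becoming empty, so that the iff holds uniformly with the emptiness remark.
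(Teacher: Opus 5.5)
Your proof is correct and complete. Writing $\Theta(C)$ as an intersection of one half-line per definite label gives monotonicity at once, and your witnesses $\acc$ (acceptable case) and $\max(x,\acc)$ (degraded case) settle both directions of the strictness characterization, including the case where the set becomes empty. The paper states this proposition without a written proof, offering only the remark that new evidence can tighten the interval or expose an inconsistency. Your argument supplies the missing details using the same label-to-constraint translation ($\acc\leq\theta$, $\theta<\rej$) as the paper's proof of Proposition~\ref{prop:agreement}.
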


With fixed labels and deviations, additional development evidence can tighten the interval or expose an inconsistency. 
Under H, tightening can increase coverage without introducing a disagreement with a definite outcome label. 

\section{Method}
\label{sec:method}

\begin{figure*}[t]
\centering
\includegraphics[width=\textwidth]{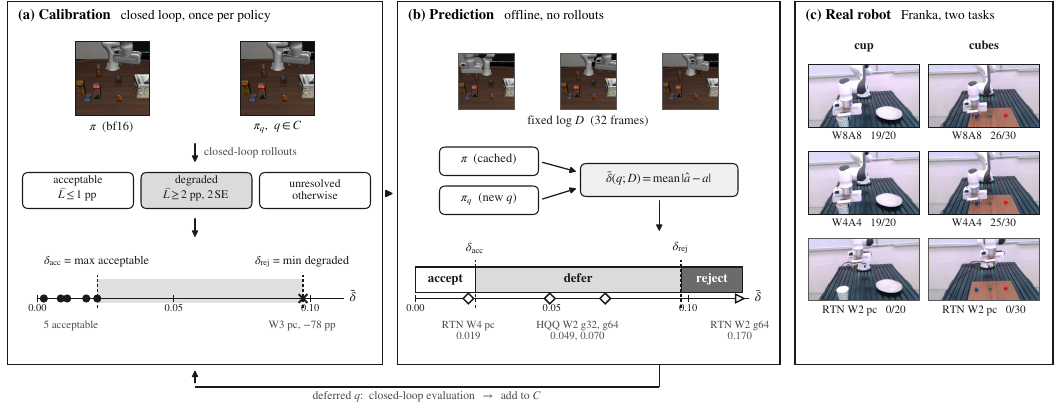}
\caption{\textbf{Overview of PreDE.}
(a) Closed-loop outcomes and offline action deviations of development configurations establish policy-specific
acceptance and rejection thresholds.
(b) New candidates are compared with cached reference
actions on a fixed observation log and accepted, rejected,
or deferred. 
(c) A separate real-robot experiment evaluates deviation groups fixed before trials.}
\label{fig:pipeline}
\end{figure*}

\subsection{Problem formulation and offline measurement}

Given a reference policy $\pi$, a quantization configuration $q$ specifies precision, grouping, layer coverage, and quantizer, producing a quantized policy $\pi_q$.
Under a fixed task distribution and control condition, let $p_0$ and $p_q$ denote their closed-loop success probabilities, with task loss $L(q)=p_0-p_q$.
The condition fixes observation preprocessing, action parameterization, chunk length, execution rate, and denoising steps.
Our goal is to predict degradation for a new configuration before evaluating it in closed loop.
PreDE uses a fixed observation log $D$ and a development set $C$ of quantized configurations with measured closed-loop outcomes (Fig.~\ref{fig:pipeline}).

For each observation in $D=\{o_i\}_{i=1}^{M}$, the reference and candidate receive identical inputs and matched action-generation seeds.
Let $a^{\mathrm{ref}}_{ihj}$ and $a^q_{ihj}$ denote their actions at chunk step $h$ and component $j$.
We measure mean absolute component deviation:
\begin{equation}
\dev(q;D)=
\frac{1}{MHd}
\sum_{i=1}^{M}\sum_{h=1}^{H}\sum_{j=1}^{d}
\left|a^q_{ihj}-a^{\mathrm{ref}}_{ihj}\right|,
\label{eq:metric}
\end{equation}
where $H$ is the chunk length and $d$ is the number of included action components.
The action representation and included components remain fixed within each calibration setting.
Reference actions are cached, so a new candidate requires $M$ complete policy queries without environment interaction.

\subsection{Policy-specific calibration}

For each $q\in C$, we measure $\dev(q;D)$ and the observed task loss $\widehat L(q)=\hat p_0-\hat p_q$, using reference and candidate evaluations under matched conditions.
Before calibration, we specify an acceptable observed-loss tolerance $\epsilon_A$, a degradation threshold $\epsilon_R>\epsilon_A$, and an evidence criterion $E(q)$.
These define
\begin{align}
C_A &=
\{q\in C:\widehat L(q)\leq\epsilon_A\},
\label{eq:CA}\\
C_R &=
\{q\in C:\widehat L(q)\geq\epsilon_R
\land E(q)=1\}.
\label{eq:CR}
\end{align}
Configurations in $C_A$ are \emph{acceptable}, those in $C_R$ are \emph{degraded}, and all others are
\emph{unresolved}.
The same labeling rule applies to held-out outcomes.
These labels describe finite-sample observations: an acceptable label does not establish a population loss bound, and failure to detect degradation does not establish an acceptable outcome.

Our simulation prediction experiments use
$\epsilon_A=0.01$ and $\epsilon_R=0.02$, corresponding
to one and two percentage points.
These specify the acceptable observed-loss tolerance
and the minimum loss required for a degraded label,
respectively, and were fixed before held-out evaluation.
A loss exceeding $\epsilon_R$ is not sufficient for
a degraded label: it must also satisfy the evidence
criterion
\begin{equation}
\widehat L(q)>
2\sqrt{
\frac{\hat p_0(1-\hat p_0)}{n_0}
+
\frac{\hat p_q(1-\hat p_q)}{n_q}
},
\end{equation}
where $n_0$ and $n_q$ are the reference and candidate
episode counts.
Thus, the evidence requirement depends on the observed
success rates and sample sizes, rather than a fixed
minimum detectable loss.

We compute the max/min thresholds in Eq.~\eqref{eq:anchors_def} from these development labels.
The acceptance threshold is the largest deviation among acceptable configurations, while the rejection threshold is the smallest among degraded configurations.
Unresolved outcomes establish neither threshold. 
When both subsets are nonempty and $\acc<\rej$, they define the consistent-threshold interval $\Theta(C)=[\acc,\rej)$ of Sec.~\ref{sec:tolerance}.

PreDE calibrates the behavioral interpretation of deviation. 
The underlying PTQ method determines the quantizer's parameters.
Using actual quantized configurations connects the measurement to observed task outcomes without assuming that synthetic action noise reproduces quantization effects.
Hypothesis H formalizes the ordering needed for the conditional interpretation in Proposition~\ref{prop:agreement}.
Its applicability to new configurations is evaluated through held-out predictions.

\subsection{Prediction and evaluation}

We freeze the log, measurement protocol, and thresholds before closed-loop evaluation of new configurations.
For $q\notin C$, PreDE computes its offline deviation and issues
\begin{equation}
g(q)=
\begin{cases}
\mathrm{accept}, & \dev(q;D)\leq\acc,\\
\mathrm{reject}, & \dev(q;D)\geq\rej,\\
\mathrm{defer},  & \acc<\dev(q;D)<\rej.
\end{cases}
\label{eq:decision}
\end{equation}
If either development subset is empty or $\acc\geq\rej$, PreDE defers all candidates.

Acceptance predicts an acceptable outcome, while rejection predicts degradation.
Under H, Proposition~\ref{prop:agreement} excludes the opposite definite label for each issued decision, but it does not exclude an unresolved outcome.
Deferral makes no outcome prediction and calls for closed-loop evaluation if a deployment decision is needed.
This follows the selective prediction principle of withholding decisions when evidence is insufficient rather than deciding on every candidate~\cite{selective}.

Predictions are determined before observing candidate closed-loop outcomes.
Decision coverage is the fraction of held-out candidates receiving accept or reject, including those whose eventual outcomes remain unresolved.
We score predictions against the labels defined by Eqs.~\eqref{eq:CA}--\eqref{eq:CR}.
Accepting a degraded configuration is a false acceptance, while 
rejecting an acceptable configuration is a false rejection.
Accepting an acceptable configuration or rejecting a
degraded configuration counts as a correct decision.
An issued decision with an unresolved outcome counts
as neither correct nor incorrect.
Deferred outcomes are reported but not scored.

Evaluated configurations may subsequently enter $C$ to update the anchors, as described in
Proposition~\ref{prop:refine}.
Once used for recalibration, they become development
evidence, and the updated rule requires new held-out
evaluation.
This separates initial calibration, offline prediction,
and further interaction for deferred candidates.

\begin{table*}[t]
  \centering\small
  \caption{Closed-loop success rates (\%) for 33 quantized
  model--benchmark pairs.
  W8A8 uses ViDiT-Q-based quantization.  W4A4 uses
  SVDQuant-style quantization. W4 and W3 use HQQ
  weight-only quantization.
  In an additional single-seed Cosmos/LIBERO development
  comparison, RTN W3 achieves 97.40\% success with group
  size 128 and 20.05\% with per-channel quantization.}
  \label{tab:cross}
  \setlength{\tabcolsep}{7pt}
  \begin{tabular}{llrrrrr}
  \toprule
  Model & Benchmark & bf16 & W8A8 & W4A4 & W4 & W3\\
  \midrule
  Cosmos-2B & LIBERO (four suites)
  & 98.38 & 98.12 & 97.98 & 98.20 & 97.87\\
  Cosmos-2B & RoboCasa-24
  & 67.03 & 66.42 & 65.58 & 65.58 & 61.61\\
  UVA-0.5B & LIBERO-10
  & 89.60 & 91.20 & 89.20 & 88.40 & 78.60\\
  UVA-0.5B & PushT
  & 97.47 & 99.61 & 95.79 & 96.02 & 96.79\\
  UVA-0.5B & PushT-M
  & 82.71 & 75.07 & 78.66 & 81.77 & 77.71\\
  mimic-2B & LIBERO-spatial
  & 90.40 & 91.40 & 88.30 & 92.60 & 90.20\\
  mimic-2B & LIBERO-object
  & 94.20 & 92.40 & 89.60 & 93.80 & 95.20\\
  RynnVLA-7B & LIBERO (four suites)
  & 95.30 & --- & --- & 95.05 & 94.75\\
  Fast-WAM-6B & LIBERO (four suites)
  & 95.55 & --- & 95.75 & 95.55 & 90.15\\
  \bottomrule
  \end{tabular}
  \end{table*}

\section{Experiments}
\label{sec:experiments}

\subsection{Experimental Setup}
\label{sec:setup}

\paragraph{Models and configurations}
We evaluate Cosmos Policy~\cite{cosmos}, UVA~\cite{uva},
mimic-video~\cite{mimic}, RynnVLA-002~\cite{rynn},
and Fast-WAM~\cite{fastwam}.
Simulation settings include LIBERO~\cite{libero},
RoboCasa-24~\cite{robocasa}, PushT~\cite{diffusionpolicy},
and a multi-task PushT variant.
Configurations use RTN~\cite{nagel2021} and HQQ weight-only
quantization~\cite{hqq}, ViDiT-Q-based INT8~\cite{viditq},
SVDQuant-style W4A4~\cite{svdquant}, and
QServe-style W4A8~\cite{qserve}.
The main Cosmos comparison quantizes 280 DiT linear layers
while retaining conditioning-related and final projections
in higher precision.

\paragraph{Evaluation settings}
We report success rates and differences from the corresponding bf16 reference. 
All evaluations follow the process specified on each policy’s official website.
The results in Sec.~\ref{sec:cross_model} are averaged over three seeds. 
Reference and candidate measurements use matched action-generation random states.
Resource measurements compare configurations on the same hardware with other inference stages fixed.

\paragraph{Prediction assessment}
Outcomes are labeled acceptable, degraded, or unresolved
using the criteria in Sec.~\ref{sec:method}.
We report all candidates, including deferred configurations, and assess issued decisions jointly with coverage.
Unresolved outcomes count as neither correct nor incorrect.
Two-sided Fisher tests~\cite{fisher1922interpretation}
provide an additional descriptive comparison without
replacing the recorded labeling rule.
We use $g$ for quantization group size, pg for per-group quantization and pc for
per-channel quantization.

\subsection{Quantization Results Across Models}
\label{sec:cross_model}

Table~\ref{tab:cross} summarizes 33 quantized
model--benchmark pairs, of which 19 lose at most one
observed percentage point relative to their reference.
On Cosmos/LIBERO, W8A8, W4A4, HQQ W4, and HQQ W3
achieve 98.12\%, 97.98\%, 98.20\%, and 97.87\% success,
respectively, against 98.38\% for bf16.
The additional single-seed development comparison shows
the importance of granularity: RTN W3 achieves 97.40\%
with group size 128 but only 20.05\% with per-channel
quantization.
Thus, configurations with identical nominal precision
can have substantially different closed-loop outcomes.

Offline deviation also requires setting-specific interpretation.
Fast-WAM W3 loses 5.4 percentage points at $\dev=0.0154$,
whereas Cosmos HQQ W3 loses only 0.51 points on LIBERO
at the larger deviation of 0.0179.
A shared deviation threshold cannot accept the latter
while rejecting the former.
The corresponding HQQ W3 configuration loses 5.42 points
on Cosmos/RoboCasa-24, further illustrating dependence
on the checkpoint and evaluation setting.
These comparisons also differ in task distributions,
action representations, and measurement coverage. 
Some historical deviations cover one suite while success
rates aggregate four.
They motivate setting-specific calibration rather than
isolating policy identity as the sole source of variation.

\subsection{Held-Out Degradation Prediction}
\label{sec:heldout}

We evaluate predictions on 20 Cosmos and eight UVA
configurations on LIBERO-10.
Figure~\ref{fig:heldout} orders candidates by offline
deviation and displays their predicted decisions and
closed-loop outcomes.
Predictions are determined before observing the
corresponding candidate outcomes.

\begin{figure*}[t]
\centering
\includegraphics[width=\textwidth]{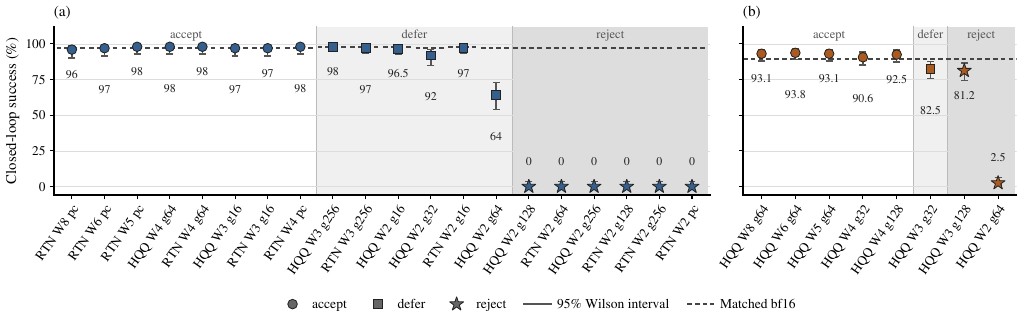}
\caption{Held-out predictions and closed-loop outcomes on
LIBERO-10 for (a) 20 Cosmos and (b) eight UVA candidates,
ordered by offline deviation within each subfigure.
Cosmos decisions are consistent with thresholds
$(0.02191,0.09736)$, while UVA thresholds are $(0.0049,0.00925)$.
Error bars are marginal 95\% Wilson intervals for success,
not intervals on differences from bf16. 
Post hoc binary baselines accept configurations with
$\dev\leq0.009$ or weight precision $\geq4$ bits,
respectively, and reject the rest. 
At 100\% coverage, they yield six and five false rejections, respectively. Both have no false acceptances
and three unresolved outcomes.
}
\label{fig:heldout}
\end{figure*}

\paragraph{Cosmos Policy}
Calibration uses results aggregated over four LIBERO suites, including LIBERO-10. Since LIBERO-10 is part of the calibration task set, we reuse the calibrated thresholds without modification for held-out quantization configurations evaluated on this suite.
Six initial development configurations establish
$\acc=0.02191$ and $\rej=0.09736$.
The 20 held-out candidates vary weight precision,
grouping, and quantizer.
PreDE accepts eight, rejects six, and defers six,
giving 70\% decision coverage (Fig.~\ref{fig:heldout}a).
All eight accepted candidates achieve 96--98 successes
in 100 episodes against the matched bf16 reference of 97/100.
All six rejected candidates achieve 0/100.
Thus, all 14 issued decisions match their observed labels.
RTN W4 pc is the closest accepted candidate to $\acc$:
its deviation is 0.01940, approximately 11\% below the
threshold, and it achieves 98/100.


\paragraph{UVA}
Four development configurations establish
$\acc=0.0049$ and $\rej=0.00925$.
Each of eight held-out candidates receives 160 episodes
across two evaluation batches, with a matched bf16
reference of 143/160.
PreDE accepts five candidates, rejects two, and defers
one, giving 87.5\% coverage (Fig.~\ref{fig:heldout}b).

The accepted HQQ W4 $g{=}128$ lies 2.9\% below $\acc$
and achieves 148/160.
The rejected HQQ W3 $g{=}128$ lies 6.3\% above $\rej$
and achieves 130/160, an 8.1-point loss.
It meets the recorded degradation criterion ($z=2.07$),
although Fisher's test gives $p=0.057$.
The other rejected candidate, HQQ W2 $g{=}64$,
achieves 4/160.
The accepted HQQ W8 $g{=}64$ achieves 149/160,
exceeding the reference by 3.75 percentage points and
receiving an acceptable label.
The deferred HQQ W3 $g{=}32$ remains unresolved at 132/160.
All seven issued decisions match their observed labels.

Across both policies, PreDE issues 21 decisions for
28 candidates (75\% coverage).
All 21 match their acceptable or degraded outcome labels,
with no observed false acceptance or false rejection.
The seven deferred candidates comprise three acceptable,
one degraded, and three unresolved outcomes.

\paragraph{Label ordering}
A post hoc check finds no acceptable--degraded ordering
contradictions within or across the development and
held-out sets.
For Cosmos, the pooled maximum acceptable deviation
is 0.04935 (RTN W2 $g{=}16$), and the minimum degraded
deviation is 0.06955 (HQQ W2 $g{=}64$).
The nearby HQQ W2 $g{=}32$ is unresolved and sets
neither bound.
UVA's corresponding bounds are $(0.0049,0.00925)$.
These describe pooled outcomes; the Cosmos comparison
combines four-suite development results with LIBERO-10
held-out results.

\subsection{Sensitivity Analysis}
\label{sec:sensitivity}

We repeat measurements on the fixed Cosmos log with
five matched action-generation seeds (195--199).
RTN W4 pc remains accepted under the frozen initial
anchors in all five runs, while HQQ W2 $g{=}64$
remains deferred.
Their deviation coefficients of variation are 3.2\%
and 1.2\%, respectively.
The two initial anchor configurations have coefficients
of variation of 2.2\% and 0.7\%. 
Their remeasurements
fall on both sides of the corresponding frozen boundaries.
Remeasuring anchors and candidates together still accepts
RTN W4 pc under every seed.

A paired bootstrap jointly resamples the log's eight
trajectories for the candidate and acceptance anchor.
Across seeds, the 95\% interval lower bounds for the
anchor deviation minus the RTN W4 pc deviation range
from 0.0008 to 0.0012 and are all positive.
These measurements support this candidate's stability
on the tested log, without establishing robustness to
independently collected logs, different log lengths, or UVA.

A post hoc analysis of the fixed initial Cosmos
development set finds unchanged anchors when
$\epsilon_A$ ranges from the largest acceptable
development loss to values strictly below the fixed
$\epsilon_R=0.02$, with the evidence criterion unchanged.
This reflects the separation of the development losses,
rather than general insensitivity to labeling tolerances.
Unchanged anchors also do not imply unchanged held-out
labels or scoring.
All reported prediction results use the original
$\epsilon_A=0.01$ and $\epsilon_R=0.02$.

\subsection{Real-World Evaluation}
\label{sec:real_world}

\paragraph{Tasks and protocol}
Two independently fine-tuned Cosmos policies perform
cup pick-and-place and instruction-conditioned colored-cube
selection on a Franka Research 3 with third-person and
wrist cameras.
An RTX~5080 server generates 16-step action chunks with
a configured execution rate of 15\,Hz, distinct from
the policy-query frequency.
Each task evaluates bf16 and eight quantized configurations,
with 20 cup trials and 30 cube trials per configuration
(450 trials total).

Offline deviation averages differences in the three
translational action components over 30 observation frames,
expressed in millimeters (mm).
Low- and high-deviation groups are fixed before trials
from gaps of 1.75--6.32\,mm for cups and
1.33--7.75\,mm for cubes.
No closed-loop anchors are calibrated for these checkpoints. 
This experiment independently examines the behavioral relevance of the prospective groups.

\paragraph{Task results}
Table~\ref{tab:robot} reports all configurations.
Every high-deviation configuration significantly degrades
relative to its task's reference (two-sided Fisher tests:
$p<0.004$ for cups and $p<0.002$ for cubes).
No low-deviation comparison reaches significance at 0.05,
but these results do not establish performance preservation.
For HQQ W3, candidate-minus-reference success differences
are $-10.0$\,pp on cups (95\% Newcombe interval
$[-31.4,+11.0]$\,pp) and $-6.7$\,pp on cubes
($[-26.6,+13.8]$\,pp).
Both intervals leave non-inferiority within one percentage
point unresolved.
Figure~\ref{fig:hardware} illustrates individual recorded
episodes and its success counts come from the separate
task-evaluation collection.

\begin{table}[t]
\centering\footnotesize
\caption{Real-robot results. Cup: 20 trials per configuration,
bf16 19/20. Cubes: 30 trials per configuration, bf16 25/30.
Deviations are in millimeters (mm). }
\label{tab:robot}
\setlength{\tabcolsep}{3pt}
\begin{tabular}{lrrrr}
\toprule
& \multicolumn{2}{c}{Cup} & \multicolumn{2}{c}{Cubes}\\
\cmidrule(lr){2-3}\cmidrule(lr){4-5}
Configuration & $\dev$ & Success & $\dev$ & Success\\
\midrule
W8A8          & 0.14 & 19/20 & 0.12 & 26/30\\
W4A8-pg       & 0.97 & 18/20 & 0.74 & 24/30\\
W4A4          & 1.51 & 19/20 & 1.05 & 25/30\\
W4A8 (QServe) & 1.31 & 19/20 & 1.19 & 24/30\\
W3 (HQQ g64)  & 1.75 & 17/20 & 1.33 & 23/30\\
\midrule
W2 (RTN g64)  & 6.32  & 10/20 & 7.75  & 12/30\\
W2 (RTN g128) & 9.17  & 4/20  & 10.33 & 1/30\\
W2 (RTN pc)   & 66.10 & 0/20  & 91.60 & 0/30\\
\bottomrule
\end{tabular}
\end{table}

\begin{figure*}[t]
\centering
\includegraphics[width=\textwidth]{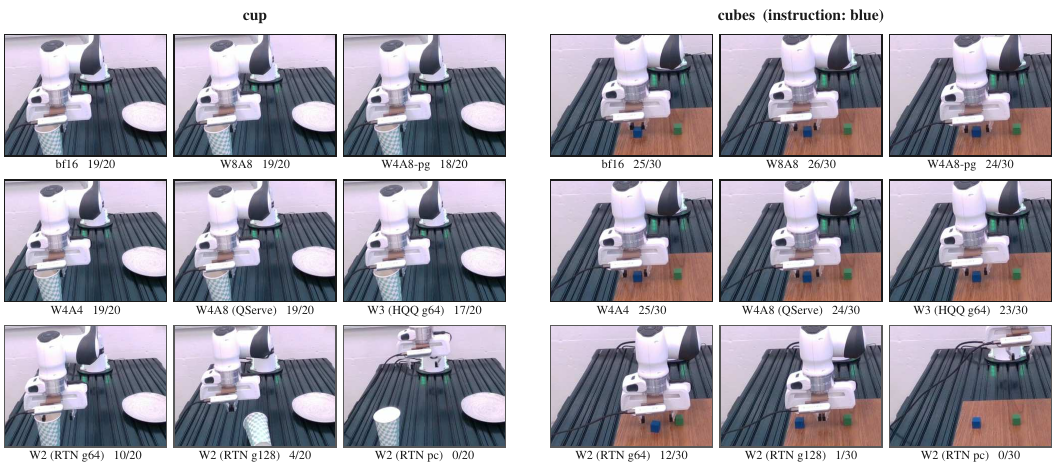}
\caption{One recorded episode per configuration on each task, showing contact or the corresponding attempted grasp:
cup (left) and instructed blue cube (right).
Low-deviation configurations (top two rows) close on the object. 
RTN W2 g128 pushes the cup over and closes beside the cube,
RTN W2 g64 pushes the cube aside during a mis-centered descent and closes beside it, and RTN W2 pc does not approach either object.}
\label{fig:hardware}
\end{figure*}

\paragraph{Quantization benefits on the robot}
A separate 45-episode instrumented collection supplies
resource measurements and video.
With other inference stages fixed, W4A4 reduces median
action-query latency from 2030 to 1485\,ms
($1.37\times$ speedup) and peak allocated memory from
5.19 to 2.93\,GB (approximately 44\%).
In the task-evaluation trials, it matches the reference's
observed cup success of 19/20
(Tables~\ref{tab:robot} and~\ref{tab:resource_robot}).
HQQ W3 reduces memory further but increases query latency.
The RTN W2 stress tests use fake quantization with bf16
storage and provide no weight-storage saving.

\begin{table}[t]
\centering\footnotesize
\caption{Resource measurements during Franka cup-task
execution on the RTX~5080 server. 
Query latency is the
median over recorded queries. 
Memory is peak allocated GPU
memory.}
\label{tab:resource_robot}
\setlength{\tabcolsep}{3pt}
\begin{tabular}{lrrr}
\toprule
Configuration & Query (ms) & Speedup & Memory (GB)\\
\midrule
bf16          & 2030 & $1.00\times$ & 5.19\\
W8A8          & 1725 & $1.18\times$ & 3.55\\
W4A8-pg       & 2019 & $1.01\times$ & 2.76\\
W4A4          & 1485 & $1.37\times$ & 2.93\\
W4A8 (QServe) & 1772 & $1.15\times$ & 2.73\\
W3 (HQQ g64)  & 2404 & $0.84\times$ & 2.70\\
\bottomrule
\end{tabular}
\end{table}

\section{Discussion and Limitations}
\label{sec:discussion}

Policy-specific calibration provides a behavioral basis for interpreting offline action deviation.
The held-out results show that this interpretation can identify both acceptable configurations and substantial degradation among the evaluated candidates.
Grouping and quantizer choice matter alongside precision, so nominal bit width alone does not fully interpret a configuration's behavioral effect.
PreDE uses measured policy changes to assess candidates produced by these different quantization choices.

Behavioral predictions must be considered alongside measured resource benefits.
On the robot, W4A4 reduces both latency and memory, whereas HQQ W3 reduces memory further but increases latency.
An accepted prediction therefore does not establish a deployment advantage by itself.
Configuration selection should consider the predicted task outcome together with the application's latency and memory constraints.
A deferred candidate may still merit closed-loop evaluation when its measured resource benefits are attractive.

Under our hypothesis, deferral corresponds to disagreement among thresholds consistent with the development labels.
Empirically, deferred candidates include both acceptable outcomes and a 33-point loss.
The interval therefore represents insufficient evidence for a prediction, rather than a region of uniformly small degradation.
Deferral also differs from an unresolved outcome:
the former withholds a prediction before evaluation, while the latter indicates that the observed result meets neither definite label.
Across the 28 held-out candidates, all 21 issued decisions match their labels, while the seven deferred candidates include three unresolved outcomes.
Decision coverage and outcome uncertainty thus describe different aspects of the evidence.

This distinction also guides additional evaluation.
Repeating offline measurements can reveal whether a candidate's deviation is stable compared with the thresholds, as examined in the seed-sensitivity analysis.
It cannot determine the task consequence of a stable deviation inside the deferred interval.
Closed-loop outcomes provide that behavioral evidence, although additional trials may still leave a result unresolved.
The appropriate next measurement therefore depends on whether uncertainty concerns the offline statistic, its behavioral interpretation, or the observed task outcome.

Several limitations remain.
Held-out prediction covers two checkpoints on LIBERO-10. 
The broader comparisons and physical grouping study provide complementary evidence.
Cosmos development and held-out evaluations differ in task coverage, so their pooled ordering is not a test under an identical task distribution.
The theoretical conclusions depend on the hypothesis, whose consistency with observed labels does not guarantee ordering on unseen configurations. 
Mean deviation also compresses errors across observations, action components, and chunk steps into a single number.
Two candidates with similar means can place their errors at different moments, including contact or grasp transitions where task consequences may differ.
Finite logs can miss such critical states and states reached only after quantization changes the controller, reflecting the distribution-shift challenge in sequential prediction~\cite{dagger}.
Changes to the policy or evaluation protocol may require recalibration.
Calibration and evaluation of deferred candidates require  closed-loop interaction, and finite trial counts leave some outcomes unresolved.
Future work should examine broader policy coverage, smaller calibration budgets, and measurements that retain information about errors at decision-critical states.

\section{Conclusion}
\label{sec:conclusion}

We presented PreDE, a policy-calibrated framework for predicting quantization-induced task degradation in world action models.
Using closed-loop results from development configurations, PreDE calibrates offline action deviations to accept, reject, or defer new candidate quantization configurations.
Under a within-setting label-ordering hypothesis, these decisions correspond to agreement among thresholds consistent with the development labels. 
Across 28 held-out configurations from two policies, PreDE issues 21 decisions (75\% coverage), all matching the observed acceptable or degraded labels.
A separate 450-trial real robot study connects prospectively identified high-deviation groups with significant task degradation.
On the robot, W4A4 quantization achieves a $1.37\times$ action-query speedup and approximately 44\% lower peak memory.
These results support policy-specific behavioral calibration for quantization configuration selection while identifying candidates that require further closed-loop evaluation.

\bibliographystyle{IEEEtran}
\bibliography{references}
\end{document}